\documentclass[letterpaper, 10 pt, conference]{ieeeconf}  
\IEEEoverridecommandlockouts                              

\usepackage{graphicx}
\usepackage{xcolor}
\usepackage{amsmath,amssymb}
\usepackage{bm}
\usepackage{bbm}
\usepackage{booktabs}
\usepackage{tikz}
\usetikzlibrary{calc,math}
\usepackage[export]{adjustbox}
\usepackage{siunitx}
\usepackage{cleveref}
\usepackage{algorithm}

\makeatletter
\expandafter\def\csname fst@algorithm\endcsname{%
  \fs@ruled
  \def\@fs@pre{%
    \vspace*{6pt}%
    \hrule height.8pt depth0pt \kern2pt}%
}
\makeatother

\usepackage[noEnd=true,indLines=true,commentColor=black]{sty/algpseudocodex}
\usepackage{sty/custom}
\usepackage[noadjust]{cite}

\title{\LARGE \bf
GAMBIT: Learning to Plan Continuous Multi-Robot Trajectories
}

\providecommand{\citep}[1]{\cite{#1}}

\author{Rishabh Jain$^{1}$, Akmaral Moldagalieva$^{2}$, Lorenzo Magnino$^1$, Michael Amir$^1$,\\%
Keisuke Okumura$^{1,3}$, Ajay Shankar$^{1}$, Wolfgang H{\"o}nig$^{2}$ and Amanda Prorok$^{1}$%
\thanks{This study was partially supported by Trinity College, Cambridge, European Research Council (ERC) Project 949940 (gAIa), JST PRESTO (JPMJPR2513) and the Deutsche Forschungsgemeinschaft (DFG, German Research Foundation) - 448549715.}%
\thanks{$^{1}$University of Cambridge, UK \texttt{\{rj412, lm2183, ko393, ma2151, as3233, asp45\}@cst.cam.ac.uk}}%
\thanks{$^2$Technical University of Berlin, Germany \texttt{\{moldagalieva, hoenig\}@tu-berlin.de}}%
\thanks{$^3$National Institute of Advanced Industrial Science and Technology (AIST), Japan}
}

\begin{document}
\maketitle
\thispagestyle{empty}
\pagestyle{empty}
\begin{abstract}
GAMBIT is an opening chess move in which a player sacrifices a piece, typically a pawn, to gain a positional advantage later in the game.
Analogously, in multi-robot coordination, individual robots may need to forgo locally reward-maximising behaviours to improve overall team performance.
Such self-sacrificial behaviours are difficult to capture with manually designed heuristics, particularly in dense, interaction-rich environments.
Focusing on double-integrator continuous dynamics, this work studies how to learn such coordinated heuristics over motion primitives for multi-robot trajectory execution.
Our framework, \gambit, first learns coordinated motion-primitive selection through imitation learning and subsequently fine-tunes the policy through reinforcement learning.
We further introduce a safeguarded rollout mechanism with backup trajectories that guarantees collision-free execution at all times.
Experiments demonstrate that \gambit substantially outperforms a range of baselines, including centralised motion planners and decentralised reactive planners, while exhibiting strong scalability.
In particular, it coordinates over a thousand robots with planning latency below a few hundred milliseconds in continuous domains.
\end{abstract}

\section{Introduction}
\label{sec:intro}
Generating trajectory plans using \emph{motion primitives} has been a successful paradigm for deriving potentially long-horizon, feasible motions to steer a robot---with applications ranging from autonomous driving to manoeuvres of aerial robots~\cite{likhachev2009planning,pivtoraiko2011kinodynamic,ortiz2024idb}.
While its multi-robot counterpart has also been studied~\cite{wen2022cl,moldagalieva2026dblacam}, a critical challenge in achieving fast and scalable planning remains the design of effective \emph{heuristics} that evaluate how ``good'' each primitive is.
Designing such a heuristic is challenging even in the single-robot case, as accurate cost-to-go estimates become difficult to obtain for higher-order dynamics beyond simple point-mass models; the multi-robot case further requires reasoning about coordination among agents.
One could design a trivial heuristic based on an interaction-agnostic single-robot heuristic, but such approximations can get significantly distorted in interaction-rich multi-agent environments, where effective coordination may require non-greedy motions, such as temporarily moving away from the goal to accommodate other agents.

Using na\"{i}ve or inaccurate heuristics for motion primitives can misguide the planning process, resulting in either considerable computational effort or highly suboptimal robot trajectories.
Manually capturing such coordination requires substantial engineering effort that scales poorly with the complexity of interactions.
This makes automatically \emph{learning} these heuristics particularly attractive.
This study investigates how to realise this idea.

Learning agent-wise heuristics to synthesise coordination has proven effective in the discrete-state, \emph{grid-world} \emph{multi-agent pathfinding (MAPF)} domain~\cite{jiang2025deploying,wang2025lns2+,jain2026graph}, where the objective is to find conflict-free paths for multiple agents on a four-connected grid.
Recent work has developed high-performing heuristics using decentralised neural MAPF policies~\cite{alkazzi2024a}, where each agent receives an observation and outputs an action distribution, trained using either reinforcement learning (RL) or imitation learning (IL) from expert data.
A natural question then arises: can we effectively transfer this knowledge to \emph{continuous} state spaces?

With some adaptation, this transfer appears possible using action synthesis based on motion primitives;
however, the transition to continuous state spaces introduces fundamental challenges.
In the context of IL, acquiring a substantial amount of expert data for scenarios involving many agents is considerably more difficult than in MAPF,
where established and reliable planners are readily available~\cite{okumura2024engineering},
a challenge we hereinafter refer to as \textbf{data scarcity}.
The \textbf{multi-modality} of solutions poses another challenge: trajectories belonging to different homotopy classes can have comparable costs, making it difficult to assign a unique target action to imitate.
The \textbf{safeguarding gap} in rollouts---i.e., the lack of mechanisms that clamp collision-causing actions and facilitate learning in strong neural MAPF policies~\cite{okumura2022priority,veerapaneni2024improving,jain2026pairwise}---weakens the reward signal, making training more difficult, particularly for RL.
Along with difficulties in \textbf{scenario generation} for providing informative training signals, these challenges collectively make it difficult to design advanced heuristics for multi-robot motion primitives.

{
\begin{figure*}[ht!]
\vspace*{2pt}
\centering
\includegraphics[width=1.0\linewidth]{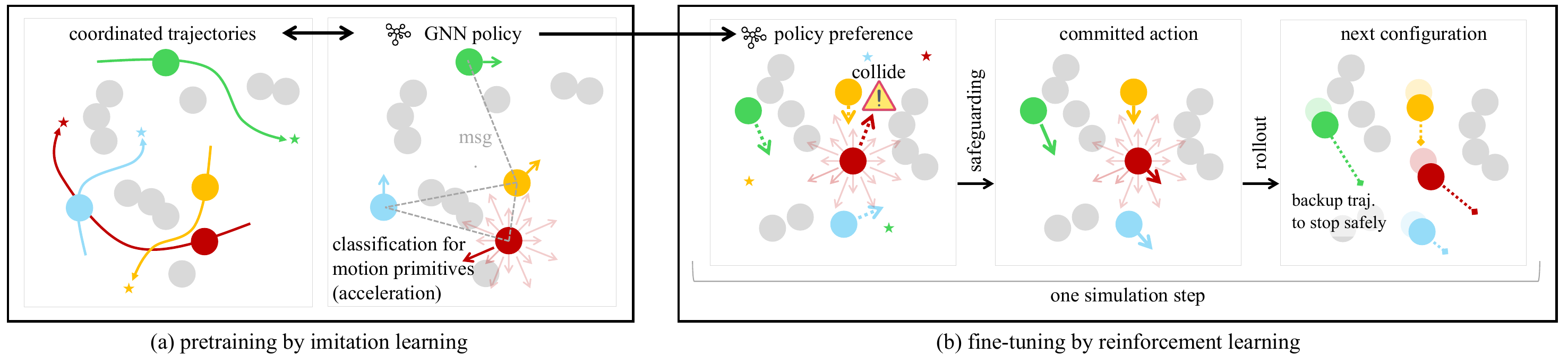}
\caption{
\gambit framework overview.
We aim to learn a coordination-aware GNN policy that provides heuristic preferences over motion primitives.
The framework consists of \emph{(a)}~imitation pretraining and \emph{(b)}~RL fine-tuning.
The former step learns from demonstration trajectories by treating motion-primitive selection as a classification task.
The latter uses multi-agent RL to maximise multi-robot navigation performance through our safeguarded rollout mechanism.
The resulting policy, combined with the safeguarded rollout, enables collision-free execution while retaining high scalability.
}
\label{fig:overview}
\end{figure*}
}

We address these challenges with a framework we term \gambit{}: \emph{\textbf{G}raph-\textbf{A}ttentive \textbf{M}otion primitives with \textbf{B}ackup for \textbf{I}nteractive \textbf{T}rajectories}.
Following the success of neural MAPF policies, our backbone architecture is based on a graph neural network (GNN) that outputs a categorical action distribution~\cite{li2021message,jain2026graph}, exploiting geometric locality to enable training on small-scale instances and deployment to larger teams.
The model is first pretrained via IL on instances with a small number of agents, using solutions generated by a centralised kinodynamic multi-robot motion planner~\cite{moldagalieva2026dblacam}, as sufficient data are attainable at this scale to overcome \textbf{data scarcity}.
The model is then fine-tuned via RL to mitigate the \textbf{multi-modality} problem by biasing the policy towards consistent action choices.
RL training uses newly developed safety-preserving rollouts with backup trajectories that overcome the \textbf{safeguarding gap} and stabilise the learning process.
Effective learning is further facilitated by carefully designed \textbf{scenario generation} that ensures agent interactions in every training instance.
\Cref{fig:overview} illustrates this pipeline.

The resulting \gambit policy, combined with motion primitives and our specialised rollout, synthesises safe continuous actions for hundreds of robots at sub-hundred-millisecond latency, while producing substantially better-coordinated trajectories in densely populated environments than reactive planners~\cite{van2011reciprocal,zhou2021ego}, computationally intensive centralised planners~\cite{zhang2025d4orm,moldagalieva2026dblacam}, and existing learning-based approaches~\cite{shaoul2025multi}.
We further validate \gambit through a physical deployment of eight multirotors~\cite{woo2025sanity} with closed-loop control.
Across both simulated and real-robot deployments, our backup mechanism guarantees safe and feasible execution at all times.

As a motion-primitive heuristic constitutes a lightweight and modular planning component, we view \gambit as a promising building block for designing capable planners beyond what we present here.
Furthermore, as policy rollout relies only on local interactions between robots, this scheme is also amenable to decentralised deployment.
In what follows, we present the problem formulation, training pipeline, safeguarding rollout, and experimental evaluation, in order.

\section{Preliminaries and Related Work}
\subsection{Problem Formulation}
\label{sec:problem-formulation}
We consider point-to-point navigation for a team of homogeneous spherical robots $A = \{1, \ldots, n\}$ with radius $R\sub{robot} \in \mathbb{R}^+$ under double-integrator dynamics, operating in a 2D or 3D workspace $W \subseteq \mathbb{R}^d$, $d \in \{2, 3\}$, containing obstacles.
Each robot $i\in A$ has its own start and goal positions $\bm{s}_i, \bm{g}_i \in W$.
Its state is given by its position and velocity, $\bm{x}_i = [\bm{p}_i, \bm{v}_i]$, while its control input is its acceleration, $\bm{u}_i = \bm{a}_i$.
The robot dynamics are given by
\begin{align}
\bm{x}_i^{k+1}
=
\begin{bmatrix}
\bm{p}_i^k
+ \Delta t\,\bm{v}_i^k
+ \frac{1}{2}\Delta t^2\,\bm{a}_i^k
\\
\bm{v}_i^k
+ \Delta t\,\bm{a}_i^k
\end{bmatrix},
\label{eq:dynamics}
\end{align}
where $\Delta t$ denotes the timestep.
This corresponds to the exact discretisation of the double-integrator dynamics under constant acceleration over each timestep.

Our objective is then to derive a kinodynamically consistent state-control trajectory of a priori unknown length $T \in \mathbb{N}^+$, $[\langle \bm{x}_i^0, \bm{u}_i^0 \rangle, \ldots, \langle \bm{x}_i^T, \bm{u}_i^T \rangle]$ for each robot to navigate from its start to goal without collisions with either obstacles or other robots, satisfying:
\begin{itemize}
\item $\bm{x}_i^0 = [\bm{s}_i, \bm{0}]$;
\item $\bm{x}_i^T = [\bm{p}_i^T, \bm{v}_i^T]$, with $\| \bm{p}_i^T - \bm{g}_i \| \leq R\sub{goal}$ and $\|\bm{v}_i^T\| \leq \epsilon$;
\item $\| \bm{p}_i^k - \bm{p}_j^k \| > 2R\sub{robot}$ for all $i \neq j \in A$; and
\item $\bm{p}_i^k$ does not collide with obstacles.
\end{itemize}
Here, $R\sub{goal} \in \mathbb{R}^+$ denotes the goal radius, $\bm{p}_i^k$ denotes the positional component of state $\bm{x}_i^k$, and $\epsilon$ denotes a sufficiently small number for the velocity threshold.
The solution quality is assessed by the \emph{sum-of-costs}, defined as the sum of travel times over all robots until each robot reaches its goal and remains there indefinitely.

Herein, we use `agent' and `robot' interchangeably.
A \emph{configuration} $Q = (\bm{x}_1, \ldots, \bm{x}_n)$ refers to a list of states for all agents.
We assume access to a collision-checking function $\funcname{collide}$ that determines whether a given state is in collision, following a standard assumption in motion planning~\cite{lavalle2006planning}.

\subsection{Planning with Motion Primitives}
A common approach to deriving such control sequences is via numerical optimisation, e.g., formulating trajectory optimisation as a non-linear program subject to dynamics and model constraints over a continuous control space, and solving it in a potentially decentralised manner~\cite{zhou2021ego,tordesillas2021mader}.
In contrast, this work follows a planning approach based on motion primitives, which restricts the infinitely many possible controls to a finite set, thus facilitating rapid discovery of coordinated actions.
Motion primitives have been explored for multi-robot navigation~\cite{moldagalieva2025db,hou2025primitive,moldagalieva2026dblacam}, demonstrating their potential for computationally efficient coordination, but they typically rely on manually designed mechanisms for selecting the right primitive.
\textbf{Our primary objective is to automate this selection process through learning.}
Accordingly, we distinguish \gambit from learning-based multi-agent control approaches that directly output continuous controls (e.g.,~\cite{riviere2020glas,batra2022decentralized}); instead, we formulate the learning problem as one of ranking the available primitives, making the learned heuristic naturally compatible with numerous global search-based planning methods.
Meanwhile, we also provide a safeguarded rollout scheme with motion primitives tailored to double integrator dynamics, demonstrating how the learned heuristic can be integrated into a practical deployment framework.

We define a \emph{motion primitive} as a finite sequence of control inputs $m = [\bm{u}^0, \bm{u}^1, \ldots, \bm{u}^{l-1}]$.
When applied to an initial robot state $\bm{x}^0$, a motion primitive induces a rollout state trajectory $[\bm{x}^0, \bm{x}^1, \ldots, \bm{x}^l]$.
In this study, for simplicity, we set $l = 1$; i.e., each primitive represents a single acceleration command $\bm{a} \in \mathbb{R}^d$, although the proposed framework can be extended to larger values of $l$.
We denote the set of available motion primitives by $M$.
Our objective is then to learn a \emph{ranking function} that ranks the primitives $m \in M$ for robot $i$ given the current configuration $Q$;
we interchangeably refer to this function as a \emph{heuristic}, as it virtually assesses the quality of each motion primitive.

While learning for motion-primitive-based planning has been studied in single-robot settings~\cite{camci2019learning,franke2025accelerating}, our primary interest lies in multi-robot systems, which present challenges that stem from the interplay between agents.

\subsection{Neural MAPF Policies}
Learning heuristics for multi-agent action selection has been studied extensively in the context of grid-world MAPF~\cite{alkazzi2024a}, under highly simplified discrete dynamics.
Typically, this line of work aims to design a policy that receives a local observation $o_i$ for agent $i$ and outputs a distribution over five discrete actions: \texttt{NORTH}, \texttt{WEST}, \texttt{SOUTH}, \texttt{EAST}, and \texttt{STAY}.
To obtain effective policies, established learning pipelines use either imitation learning (IL) or reinforcement learning (RL)~\cite{sartoretti2019primal,li2021message}.
We aim to leverage these insights beyond grid-world dynamics; however, several challenges hinder their direct adaptation to our setting, as highlighted in \cref{sec:intro}; that is, \textbf{data scarcity}, \textbf{multi-modality}, \textbf{safeguarding gap}, and \textbf{scenario generation}.
Below, we provide our \gambit approach to address these.

\section{\gambit pipeline}
The \gambit framework centres around a GNN-based policy, together with its training pipeline, comprising IL pretraining and RL fine-tuning, and a safety-preserving rollout scheme.
This section introduces the backbone policy architecture and then describes each component in turn.

\subsection{Policy Architecture}
While our pipeline is not restricted to a specific model architecture, we follow a successful paradigm adopted by GNN-based neural MAPF policies~\cite{li2021message,jain2026graph}.
The geometric locality of GNNs allows us to train the policy on instances with only a small number of robots while effectively capturing interactions between robots, and to deploy the learned policy to substantially larger teams at test time.
Specifically, the policy $\pi$ consists of
\emph{(i)}~an encoder $\phi\sub{enc}$,
\emph{(ii)}~graph neural network layers $\phi\sub{comm}$ that encode interactions between agents, and
\emph{(iii)}~an MLP decoder $\phi\sub{dec}$ that outputs an action distribution over the set of motion primitives $M$.

\subsubsection{Observation Encoding}
An observation for robot $i$, denoted by $o_i$, is the concatenation of
\emph{(a)}~its velocity,
\emph{(b)}~the relative goal vector,
\emph{(c)}~a field-of-view (FOV) representation of its surroundings $f_i$, and
\emph{(d)}~a motion-primitive embedding $\xi_i$.
The FOV $f_i$ is represented as a three-channel $d$-dimensional grid with $N\sub{obs}$ cells along each dimension.
The first two channels encode obstacle and agent occupancy, respectively, while the third encodes the goal location.
When the goal lies outside the FOV, its location is clipped to the FOV boundary, following prior learning-based MAPF approaches~\cite{sartoretti2019primal,li2021message}.
The motion-primitive embedding $\xi_i$ encodes the outcomes of the available primitives.
For each $m \in M$, we compute the relative displacement after executing $m$ followed by stopping, together with the corresponding relative goal vector.
Their concatenation is processed by an MLP and averaged over all primitives to obtain $\xi_i$.
Finally, the complete observation is encoded as $z_i^0 = \phi\sub{enc}(o_i)$.

\subsubsection{Interaction Modelling and Decoding}
For a configuration $Q$, we introduce a communication graph $G\sub{comm}$, where each vertex represents a robot $i \in A$ and an edge $i$--$j$ exists if robots $i$ and $j$ are sufficiently close in position, i.e., within a communication radius $R\sub{comm}$.
The communication module $\phi\sub{comm}$ propagates the robot-wise embeddings $z_i^0$ through $L$ GNN message-passing layers over $G\sub{comm}$ using scaled dot-product attention~\cite{Vaswani-Attention}, with pairwise relative positions as edge features.
Finally, based on the resulting representation $z_i^L$, the decoder $\phi\sub{dec}$ outputs an action distribution over $M$.

\subsection{Pretraining by Imitation Learning}
\label{sec:pretraining}

Pretraining by IL followed by RL fine-tuning has emerged as a successful strategy for learning capable policies across domains, ranging from language modelling~\cite{ouyang2022training} to robot learning~\cite{intelligence2026pi}; we follow this strategy.
For expert data collection, we employ an adapted version of db-LaCAM~\cite{moldagalieva2026dblacam}, an offline centralised planner, in which its low-level configuration generation is replaced with our enhanced PIBT procedure described in \cref{sec:safe-rollout}, enabling long-horizon kinodynamic multi-robot motion planning as defined in \cref{sec:problem-formulation}.
We generate demonstration trajectories by running the planner on instances involving a few agents.
We limit the team size because collecting demonstrations for large teams is computationally expensive even with leading planners in this domain, making the construction of a large corpus prohibitively costly.
Thanks to the locality of GNNs, we instead generate small instances that require tight local coordination, allowing the learned interactions to generalise to larger teams to overcome \textbf{data scarcity}.

\textbf{Scenario generation} is therefore a key component for successful learning.
In particular, we prepare two families of instances, \emph{(i)}~\mapname{Blocked Random} and \emph{(ii)}~\mapname{Forest}, visualised in \cref{fig:result}.
Given the number of robots, the former first places start--goal pairs at random positions and then creates a small wall between these positions by placing
obstacles perpendicularly to the line segments connecting each start--goal pair, with some random perturbations.
Unlike purely random instance generation, this structure is more likely to require coordination.
The latter contains regularly placed obstacles that form numerous narrow passages, while start--goal pairs are sampled randomly, exposing the policy to obstacle avoidance and potential deadlocks with other robots.

Our IL formulation treats motion primitive selection as a classification problem, rather than directly regressing continuous actions.
Accordingly, for each continuous expert action~$\bm{u}$, we construct its discrete target as the closest motion primitive $m^\ast \in M$ and train the policy to predict $m^\ast$ using cross-entropy loss.
This formulation offers two advantages over directly regressing continuous actions followed by quantisation: \emph{(i)}~more stable learning, as we observed empirically, and \emph{(ii)}~a categorical distribution over $M$, which can be directly exploited by our safeguarded rollout scheme.

\subsection{Fine-tuning by Reinforcement Learning}
While IL provides the basic skills required for multi-robot navigation, IL alone does not necessarily yield a skilful policy.
In addition to the solution \textbf{multi-modality}, this limitation is amplified by the fact that computing high-quality trajectories for motion planning is itself computationally challenging~\cite{solovey2020complexity}, and we therefore rely on suboptimal solutions obtained by expert planners within a practical time budget.
Consequently, the demonstration dataset can contain suboptimal behaviours, which are inherited by the policy through imitation.
We address this limitation through RL fine-tuning, allowing the policy to directly optimise solution quality.
Empirically, we find that training with RL from scratch is unstable, whereas the two-stage IL--RL procedure yields stable training and a capable policy.

For this RL phase, we use the same instance distribution as in the IL phase and employ MAPPO~\cite{yu2022surprising} with a simple reward directly focused
on optimising the sum-of-costs and an additional reward for all robots reaching their goals.
Let $Q$ denote the rollout configuration obtained by applying the actions selected by the policy to the input configuration.
For each robot $i$, we define the binary goal indicator,
$\mathbb{I}_i(Q) := \mathbb{I}\left[i \text{ is at its goal in } Q\right]$.
The reward is then defined as
\begin{align}
w_1 \prod_{i \in A} \mathbb{I}_i(Q) - w_2 \sum_{i \in A} \left(1-\mathbb{I}_i(Q)\right),
\label{eq:reward}
\end{align}
where $w_1,w_2>0$ are weighting coefficients.
The first term provides a team-level bonus when all robots reach their goals, encouraging successful planning as a team.
The second term penalises each robot that has not yet reached its goal at every timestep, thereby encouraging lower sum-of-costs.

Notably, \cref{eq:reward} contains no collision penalty for either obstacles or other robots.
Incorporating such penalties into the reward can dilute the goal-directed learning signal; instead, we address the \textbf{safeguarding gap} through a specialised rollout scheme for double-integrator dynamics that prevents collisions algorithmically.

\subsection{Safeguarded Rollout with Backup Trajectories}
\label{sec:safe-rollout}
Recall that the policy outputs an action distribution $\pi(o_i)$ for robot $i$ over the set of motion primitives $M$, which by itself does not determine the action to execute.
A straightforward strategy is to sample a primitive $m \in M$ according to $\pi(o_i)$; however, the resulting rollout under \cref{eq:dynamics} is not necessarily collision-free.
Rather than penalising unsafe primitives during learning, we develop a provably safe rollout scheme by exploiting backup trajectories, analogous in spirit to control barrier functions for continuous control~\cite{ames2019control}.

\subsubsection{Safe Execution with Static Obstacles}
We first illustrate the basic idea for robot $i$ in isolation.
Given a state $\bm{x}_i$ and a motion primitive $m \in M$, we define a \emph{stopping trajectory} $\tau\sub{stop}(\bm{x}_i,m)$ by first applying $m$ to $\bm{x}_i$ and subsequently applying maximum acceleration opposite to the velocity until the robot comes to a complete stop.
This results in a state--control trajectory
$[\langle \bm{x}_i^0 ({=}\bm{x}_i), m \rangle,
\langle \bm{x}_i^{1}, -\bm{v} / |\bm{v}| \cdot \min\{a\sub{max}, \|\bm{v}\| / \Delta t\} \rangle,
\ldots, \langle \bm{x}\sub{stop}, \bm{0} \rangle]$,
where $\bm{v}$ represents the velocity component of $x_i$ and $x\sub{stop}$ has zero velocity.
We filter out motion primitives whose corresponding stopping trajectories collide with obstacles, yielding a \emph{safe primitive set} $M\suf{safe}(\bm{x}_i) \subseteq M$, abbreviated as $M\suf{safe}_i$ when the state $\bm{x}_i$ is clear from context.
This set is safe in the sense that, for any primitive selected from $M\suf{safe}_i$, a collision-free trajectory that brings the robot to a complete stop is always available.

Now, suppose that at timestep $t-1$, robot $i$ at state $\bm{x}_i^{t-1}$ executes a primitive $m \in M\suf{safe}(\bm{x}_i^{t-1})$, resulting in the next state $\bm{x}_i^t$.
If $M\suf{safe}(\bm{x}_i^t)$ is non-empty, selecting a primitive from this set again ensures the availability of a collision-free stopping trajectory.
If $M\suf{safe}(\bm{x}_i^t)$ is empty, however, the robot can continue executing the remaining portion of $\tau\sub{stop}(\bm{x}_i^{t-1},m)$, which is guaranteed to bring the robot to a stop without collision.
We refer to this stopping trajectory retained from the previously selected primitive as the \emph{backup trajectory}.
This rollout strategy, which switches to the backup trajectory if no safe primitive is available, prevents the robot from entering a dead-end state, a state from which no collision-free action sequence exists.
Given that $\bm{x}_i^0$ is collision-free and has zero velocity, simple induction shows that this execution scheme guarantees collision-free motion with respect to obstacles.

\subsubsection{Safe Execution with Other Robots}
We now leverage the backup trajectory scheme for safe multi-robot execution by integrating it with \emph{PIBT collision shielding}~\cite{okumura2022priority,veerapaneni2024improving}, a successful paradigm for substantially improving the performance of neural MAPF policies.
In the context of grid-world MAPF, PIBT receives an agent-wise ordered list of one-step actions, called a \emph{preference}, and greedily assigns actions to construct a collision-free configuration from the current one.
Such preferences can be constructed, for example, by ordering actions according to their cost-to-go values to individual goals or directly from the outputs of neural policies.
PIBT guarantees that a feasible action assignment can always be found, with this feasibility relying on the availability of the `stay' action as a safe fallback when preferred actions cannot be assigned, provided that interacting agents can likewise fall back to their stay actions.

In kinodynamic settings, however, such a stay action is not always available due to non-zero velocity.
PIBT has been extended to general motion primitives~\cite{moldagalieva2026dblacam};
while this variant guarantees collision-free outcomes, it does not guarantee that a feasible action assignment can always be found, which is particularly critical in dense scenarios.
Our key observation is that, under double-integrator dynamics, the backup trajectories introduced above can play a role analogous to the `stay' action, thereby providing both safety, i.e., collision-free execution, and feasibility, i.e., a feasible action assignment can always be found in finite time.

{
  \newcommand{\valid}{\n{\texttt{valid}}}
  \newcommand{\invalid}{\n{\texttt{invalid}}}
  \begin{algorithm}[t!]
    \caption{Safeguarded rollout with PIBT and backups}
    \label{algo:pibt}
    \begin{algorithmic}[1]
      \small
      \Input{config $(\bm{x}_1, \ldots, \bm{x}_n)$, backup actions $(b_1, \ldots, b_n)$}
      \Output{action $m_i$ for each robot $i\in A$, initialised with $\bot$}
      \State \textbf{for}~$i \in A$~\textbf{do}~%
      \textbf{if}~$m_i=\bot$~\textbf{then}~\Call{PIBT}{$i$}
      \label{algo:pibt:top-level}
      \smallskip
      \Procedure{PIBT}{$i$}
      \label{algo:pibt:recursive}
      \State sort $M\suf{safe}_i$
      \Comment{based on $\pi$ in \gambit}
      \label{algo:pibt:sort}
      \For{$m \in M\suf{safe}_i$}
      \IfSingle{$\funcname{collide\_stop}(i, m)$}{\textbf{continue}}
      \State $m_i \gets m$
      \While{
        $\exists j\in A, m_j=\bot, \funcname{collide\_stop}(j, b_j)$
      }
      \label{algo:pibt:pi-check}
      \IfSingle{$\Call{PIBT}{j} = \invalid$}{\textbf{break}}
      \EndWhile
      \IfSingle{no \invalid received}{\Return \valid}
      \EndFor
      \State $m_i \gets b_i$
      \label{algo:pibt:backup}
      \State \Return \invalid
      \EndProcedure
    \end{algorithmic}
  \end{algorithm}
}

\Cref{algo:pibt} provides a concrete procedure for this safeguarded rollout, which takes as input the current state $\bm{x}_i$ and the first action $b_i$ of the backup trajectory for each robot~$i$.
Its overall structure follows the original PIBT algorithm, where a recursive subprocedure $\mathsf{PIBT}$ (\cref{algo:pibt:recursive}--) attempts to assign an action greedily and is invoked sequentially for each agent (\cref{algo:pibt:top-level}).
The preference over actions is determined at \cref{algo:pibt:sort} and can in principle be defined arbitrarily;
in this study, we stochastically sample the order of motion primitives, based on the probabilities assigned by the policy $\pi$.
The subprocedure returns \texttt{valid} if robot $i$ successfully secures a non-colliding action;
otherwise, it returns \texttt{invalid}, forcing the interacting agent to replan.
The function $\funcname{collide\_stop}(i, m)$ determines whether robot $i$, by executing primitive $m$ from its current state $\bm{x}_i$, would collide with robots whose actions have already been assigned.
Our key modification to the original PIBT is that this collision check extends beyond the immediate execution of $m$ to the corresponding stopping trajectories;
this ensures that robots can remain collision-free in the future, provided that they execute their corresponding backup trajectories.
\Cref{algo:pibt} falls back to the backup trajectory if no primitive can be safely assigned (\cref{algo:pibt:backup}).

\begin{theorem}
In finite time, \cref{algo:pibt} finds a safe action assignment together with collision-free subsequent stopping trajectories, provided that the input backup trajectories are jointly collision-free.
\label{thrm:one-step}
\end{theorem}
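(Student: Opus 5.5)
We argue by tracking an invariant over the execution of \cref{algo:pibt}. The invariant is: at every moment, the set of robots with $m_i\neq\bot$ have jointly collision-free stopping trajectories, meaning the trajectories induced by executing $m_i$ and then stopping. Moreover, for robots whose assigned action is their backup $b_i$, the induced stopping trajectory is exactly the remainder of the input backup trajectory.

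Two facts are used first. Since $M\suf{safe}_i$ filters out primitives whose stopping trajectories hit obstacles, any primitive assigned in the loop is obstacle-safe. For the fallback at \cref{algo:pibt:backup}, the backup trajectory is obstacle-free by hypothesis, being the retained stopping trajectory of a previously safe primitive. Inter-robot safety is the substantive part. A non-backup primitive $m$ is assigned only after $\funcname{collide\_stop}(i,m)$ fails, so its stopping trajectory is free of conflicts with all already-assigned robots at that moment.

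\textbf{Step 1 (later assignments).} The delicate case is an assignment made later that might collide with an earlier one. Here we use the hypothesis that the input backups are jointly collision-free. A robot $j$ is forced to be fallen back (\cref{algo:pibt:backup}) only after exhausting its list. We show its backup never conflicts with any assigned robot, by two cases.
\begin{itemize}
\item If the assigned robot $k$ uses its backup $b_k$, the pair is conflict-free by the hypothesis.
\item If $k$ uses a non-backup primitive, then the while-loop at \cref{algo:pibt:pi-check} ensured the following. Immediately after $k$ took $m$, every unassigned $j$ whose backup collided with $k$'s stopping trajectory was recursively called. So either $j$ was assigned a non-colliding primitive, or $j$ returned \texttt{invalid}, and $k$ then discarded $m$ and tried its next option.
\end{itemize}

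\textbf{Step 2 (undoing on \texttt{invalid}).} The point to verify carefully is that undoing on \texttt{invalid} is sound. When $k$ moves on after an \texttt{invalid}, the tentative assignments made inside the failed subtree must either remain consistent or be reset. I would reinterpret the pseudocode so that a robot returning \texttt{invalid} is fixed to its backup. Since backups are pairwise compatible, this can only create a conflict with $k$'s tentative $m$, which $k$ abandons. Eventually $k$ either finds an $m$ whose conflicting neighbours all succeed, or $k$ itself falls back to $b_k$.

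\textbf{Step 3 (final assignment).} When the top-level loop ends, every robot is assigned. Any unassigned robot whose backup would conflict was dealt with by the while-loop. Robots never touched keep being compatible because the while-loop condition is false for them. So the invariant yields that the final assignment is jointly collision-free with collision-free stopping trajectories.

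\textbf{Step 4 (termination).} Each call $\mathsf{PIBT}(j)$ is made only for $m_j=\bot$, and $m_j$ is set immediately on entry to the loop or at fallback. Hence each robot is invoked at most once, as in original PIBT, and the recursion depth is at most $n$. Each call iterates over a finite $M\suf{safe}_j$, each iteration triggers at most $n$ recursive calls, and the stopping trajectories have finite length, so collision checks terminate. The total work is therefore finite, of order $O(n|M|)$ calls to $\funcname{collide\_stop}$.

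The main obstacle I expect is Steps 1 and 2: showing that an assigned-and-then-abandoned primitive of some robot does not leave a stale conflict with an already-finalised neighbour. This amounts to confirming that all assignments persisting to the end were checked against every persisting assignment. I would handle it by induction on the recursion tree, using that a \texttt{valid} return certifies its whole subtree is consistent with the caller's current choice.
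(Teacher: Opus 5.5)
\textbf{There is a genuine gap at the step you flag as delicate.} Your outline is the same as the paper's sketch: each robot is invoked at most once, every non-backup primitive is checked by \texttt{collide\_stop} against all assigned stopping trajectories, the while loop protects unassigned robots' backups, and fallbacks are mutually compatible by hypothesis. The problem is in the second case of your Step~1. There you conclude that if an unassigned $j$ whose backup collides with $k$'s tentative $m$ returns \texttt{invalid}, then $k$ discards $m$. That presumes $j$ is called from $k$'s own while loop.

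The loop condition is global, and $\mathsf{PIBT}$ returns \texttt{valid} only when that condition is false. So after a \texttt{valid} child, the caller's loop exits immediately, and each candidate of $k$ makes at most one direct recursive call. All other robots in conflict with $m$ are therefore processed inside descendants' frames, and their \texttt{invalid} goes to that descendant, not to $k$. Concretely (reading ``no invalid received'' per candidate, as you do):
\begin{itemize}
\item $g$ takes $m_g$, and both $b_p$ and $b_j$ collide with $m_g$. $g$'s loop happens to pick $p$ first.
\item $p$ takes $m_p$, and $p$'s loop calls $j$, because of $m_g$.
\item Every candidate of $j$ collides with $m_g$ or $m_p$, so $m_j \gets b_j$ and $j$ returns \texttt{invalid} to $p$.
\item $p$ then finds $m_p'$ compatible with $m_g$ and $b_j$, and returns \texttt{valid}.
\item $g$'s loop now sees no unassigned conflicting robot, since $j$ is assigned. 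So $g$ returns \texttt{valid} with $m_g$ still colliding with $b_j$.
\end{itemize}
The same execution refutes your Step~2 claim that a fallback ``can only create a conflict with $k$'s tentative $m$''. It also refutes your proposed induction hypothesis that a \texttt{valid} return certifies its subtree is consistent with the caller's choice: $p$'s subtree is inconsistent with $g$'s choice.

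\textbf{What does hold.} After any \texttt{valid} return, no unassigned backup conflicts with any assigned robot. Hence every robot in a subtree that returns \texttt{invalid} ends on its backup, so the ``reinterpretation'' in Step~2 is unnecessary. Hence also a fallback $b_j$ can conflict only with tentative primitives of robots currently on the call stack. Your argument is valid exactly when every fallback conflicts only with its immediate caller's tentative primitive.

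\textbf{What is missing.} You need either an invariant that, when $i$ invokes $j$, $b_j$ conflicts with no assigned robot other than $i$, or a rule that propagates \texttt{invalid} upward to every ancestor whose tentative primitive collides with $b_j$. The invariant is automatic in grid PIBT, where a vertex has a single occupant, but \cref{algo:pibt} does not enforce it. The paper's sketch passes over the same point; it only observes that fallen-back robots are mutually collision-free. So closing the gap needs a new argument or a change to the procedure, not just more careful bookkeeping.

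Your termination argument is sound. Only the count is off: each evaluation of the while condition scans up to $n$ robots, giving $O(n^2|M|)$ calls to \texttt{collide\_stop} rather than $O(n|M|)$.
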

\begin{proof}\emph{(sketch)}
The collision-free property follows from the analysis of the original PIBT and its motion-primitive-based variant~\cite{okumura2022priority,moldagalieva2026dblacam}.
In short, the subprocedure $\mathsf{PIBT}$ is invoked only for a robot whose action has not yet been assigned, and each robot finalises its action at most once.
A robot $i$ finalises its action $m_i$ only if the corresponding stopping trajectory does not collide with those of robots whose actions have already been assigned.
We guarantee that unassigned robots have a safe action available by ensuring that robot $i$'s actions do not conflict with the backup trajectories of these unassigned robots;
otherwise, $\mathsf{PIBT}$ is recursively invoked for the interacting robot $j$ to resolve the conflict before the action of $i$ is finalised.
If no safe primitive can be assigned through this recursion, the involved robots fall back to their input backup trajectories, which are collision-free by assumption.
Hence, the recursion terminates after finitely many assignments while preserving collision-free stopping trajectories for all robots.
\end{proof}

\begin{theorem}
  Consider a multi-robot execution in which \cref{algo:pibt} is invoked every $\Delta t$, the resulting actions are executed, and the corresponding stopping trajectories are retained as backup trajectories for the next invocation.
  Then, the resulting multi-robot trajectories remain collision-free for all time.
\end{theorem}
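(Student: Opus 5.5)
The plan is an induction on the invocation index $t$, with \cref{thrm:one-step} as the inductive engine. The invariant, stated for each invocation time $t$, is:

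\emph{(I1)} the current configuration $(\bm{x}_1^t,\ldots,\bm{x}_n^t)$ is collision-free with respect to obstacles and between robots;

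\emph{(I2)} each robot $i$ holds a backup trajectory $\beta_i^t$ starting at $\bm{x}_i^t$ and ending at rest, with first action $b_i$;

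\emph{(I3)} the family $\{\beta_i^t\}_{i\in A}$ is jointly collision-free, both pairwise among robots at every time step and against static obstacles.

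Trajectories are compared time-aligned; a robot that has come to rest is treated as remaining at its final position indefinitely. This padding is what makes trajectories of different lengths comparable.

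For the base case $t=0$, every robot starts at $[\bm{s}_i,\bm{0}]$, which is collision-free, and has zero velocity. We take $\beta_i^0$ to be the trivial trajectory that stays at $\bm{s}_i$ with $b_i=\bm{0}$. Joint collision-freeness then reduces to the initial configuration being collision-free, which the problem formulation presupposes.

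For the inductive step, suppose (I1)--(I3) hold at time $t$. By (I3), the hypothesis of \cref{thrm:one-step} is met. Hence \cref{algo:pibt} terminates and returns actions $m_i$ whose stopping trajectories $\tau_i$ are jointly collision-free. Here $\tau_i=\tau\sub{stop}(\bm{x}_i^t,m_i)$ if $m_i$ was chosen from $M\suf{safe}_i$, and $\tau_i$ is the remainder of $\beta_i^t$ if the robot fell back to $b_i$. Obstacle-freeness of each $\tau_i$ has two sources. In the first case it holds by the definition of $M\suf{safe}_i$. In the second case $\tau_i$ is a suffix of $\beta_i^t$, which is obstacle-free by (I3).

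Executing $m_i$ for one step $\Delta t$ gives $\bm{x}_i^{t+1}$, which is exactly the second state of $\tau_i$. I then set $\beta_i^{t+1}$ to be $\tau_i$ with its first transition removed. Each $\beta_i^{t+1}$ starts at $\bm{x}_i^{t+1}$ and ends at rest, giving (I2). Joint collision-freeness of time-shifted suffixes follows from that of the $\tau_i$, because all robots are shifted by the same single step; this gives (I3). In particular, the configuration at $t+1$ is collision-free, giving (I1).

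The whole trajectory is collision-free, not only at the sampling instants: the segment from $\bm{x}_i^t$ to $\bm{x}_i^{t+1}$ is the first segment of $\tau_i$. It therefore inherits whatever (continuous or discretised) collision check $\funcname{collide\_stop}$ and $\funcname{collide}$ apply to stopping trajectories. Induction over $t$ yields collision-freeness for all time.

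The step I expect to need the most care is making precise that the retained stopping trajectories really serve as valid input backups for the next call. Three things must be checked:

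\emph{(a)} When robot $i$ falls back to $b_i$, the new backup is the tail of the old one, so a backup never has to be recomputed.

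\emph{(b)} When the old backup has already reached rest, the tail is the stay-at-rest trajectory with $b_i=\bm{0}$. This case must be handled explicitly so the invariant does not break once a robot stops.

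\emph{(c)} Joint collision-freeness in \cref{thrm:one-step} must be read as pairwise and time-aligned, including the padded rest phases. This requires $\funcname{collide\_stop}$ to check a robot against the stationary tail of a shorter neighbouring trajectory, and I would state this convention explicitly before running the induction.
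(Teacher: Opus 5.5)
Your proposal is correct and follows the paper's proof: an induction over invocations, where \cref{thrm:one-step} supplies the inductive step and the retained stopping trajectories serve as the jointly collision-free backups for the next call. What you add is an explicit version of what the paper's two-line argument leaves implicit: the stay-at-rest base case, the separate obstacle argument (safe-primitive filtering in one case, suffixes of the old backup in the other), and the time-aligned, rest-padded reading of joint collision-freeness.
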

\begin{proof}
By the previous theorem, each PIBT invocation preserves collision-freeness and produces collision-free stopping trajectories, which are used as the backup trajectories for the next invocation.
The claim therefore follows by induction over timesteps.
\end{proof}

\subsubsection{Auxiliary Reward for RL}
With the safeguarded rollout, an ordered list of motion primitives is sampled from the policy output $\pi(o_i)$, after which the selected actions may be overridden by PIBT collision shielding.
While safety is ensured, such interventions can weaken the learning signal for the policy itself to synthesise coordinated actions.
Therefore, we augment the reward defined in \cref{eq:reward} with an auxiliary term based on whether the policy's actions are consistent with the PIBT-processed actions, penalising the overriding of the actions by the PIBT collision shielding.

{
\begin{figure*}[t]
\vspace*{1pt}
\centering
\begin{tikzpicture}
  \small
  \newcommand{\figwidth}{0.16\linewidth}
  \newcommand{\figheight}{0.12\linewidth}
  \newcommand{\instanceSize}{0.145\linewidth}
  \newcommand{\entry}[4]{
    \node[anchor=south] at (#1 * \figwidth + 0.56*\figwidth, 1.06*\instanceSize) {\scriptsize \mapname{#3}};
    \node[anchor=south] at (#1 * \figwidth + 0.56*\figwidth, 0.98*\instanceSize) {\tiny ($n{=}#4$)};
    \node[anchor=north east] at (#1 * \figwidth + 1.06*\figwidth, 1.04*\instanceSize)
         {\includegraphics[width=\instanceSize,height=\instanceSize]{fig/raw/instances/#2_#4agents}};
    \node[anchor=north west] at (#1 * \figwidth, 0)
         {\includegraphics[width=\figwidth,height=\figheight]{fig/raw/plots/#2_success_rate}};
    \node[anchor=north west] at (#1 * \figwidth, -1.0*\figheight)
         {\includegraphics[width=\figwidth,height=\figheight]{fig/raw/plots/#2_sum_of_costs}};
    \node[anchor=north west] at (#1 * \figwidth, -2.0*\figheight)
         {\includegraphics[width=\figwidth,height=\figheight]{fig/raw/plots/#2_time}};
  }
  \entry{0}{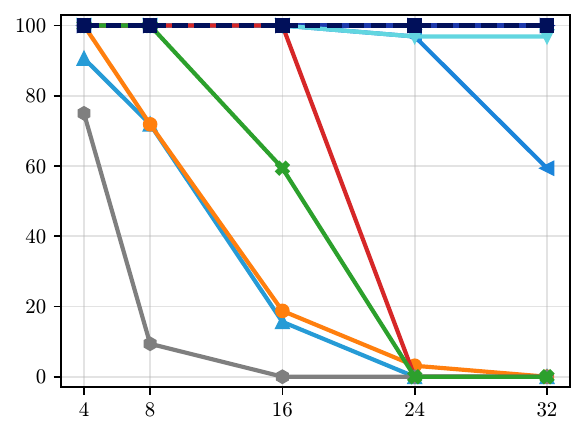}{Empty}{32}
  \entry{1}{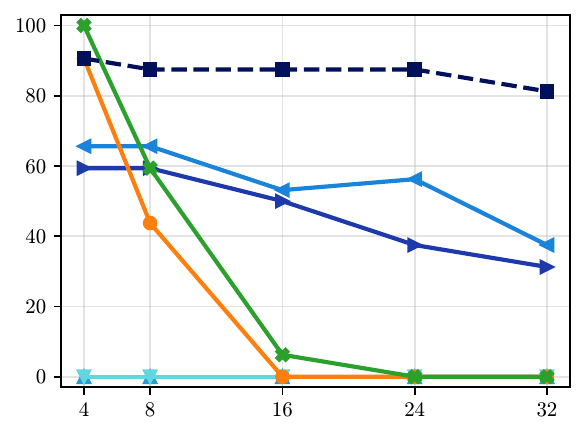}{Blocked Random}{8}
  \entry{2}{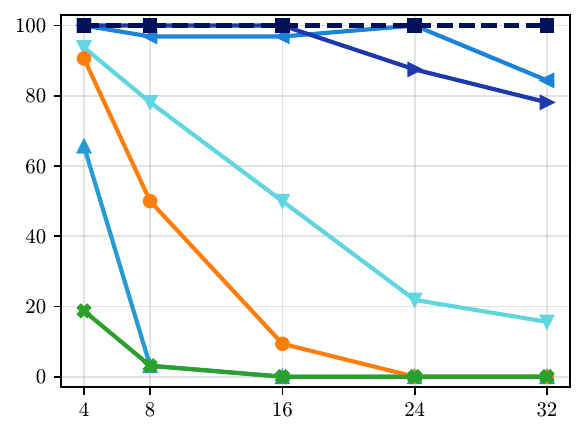}{Forest}{8}
  \entry{3}{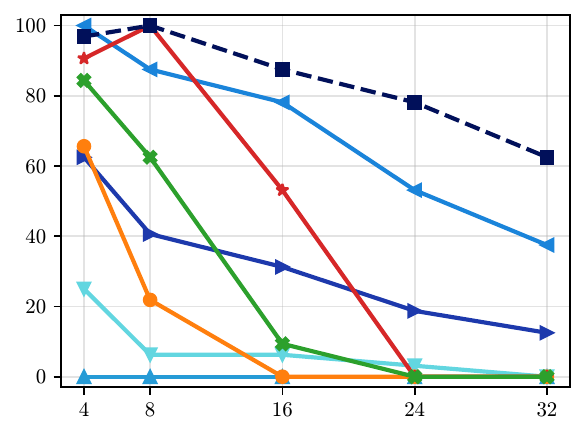}{Conveyor}{32}
  \entry{4}{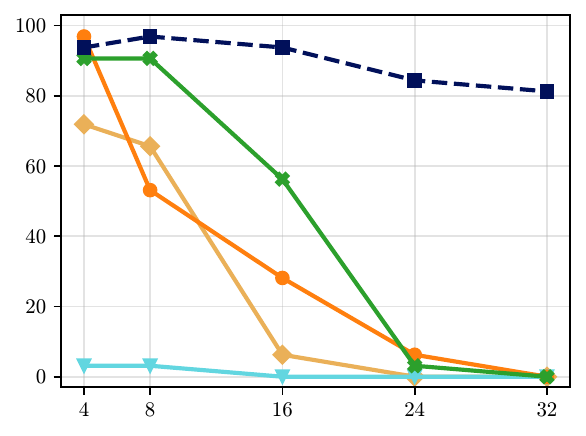}{Blocked Random-3D}{8}
  \entry{5}{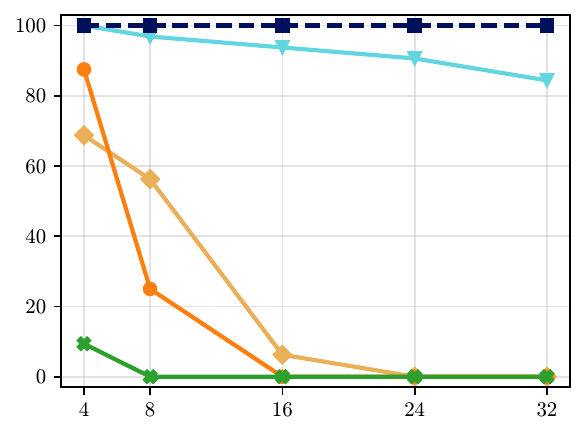}{Forest-3D}{16}
  %
  \newcommand{\socabove}[2]{
    \definecolor{dblacam}{RGB}{255 127 14}
    \fill[dblacam] (#1 * \figwidth + #2*\figwidth, -1.03*\figheight)
      circle [radius=0.7pt];
    \draw[dblacam, line width=0.7pt]
      (#1 * \figwidth + #2*\figwidth, -1.09*\figheight)
      --++(0, 0.04*\figheight);
    \fill[dblacam]
      (#1 * \figwidth + #2*\figwidth, -1.05*\figheight)
      --++(-0.018*\figwidth, -0.018*\figheight)
      --++(0.036*\figwidth, 0) -- cycle;
  }
  \socabove{0}{0.175}
  \socabove{1}{0.15}
  \socabove{3}{0.15}
  \socabove{4}{0.195}
  \socabove{5}{0.175}
  %
  \begin{scope}[shift={(0,0)}]
    \node[anchor=south,rotate=90] at (0, -0.5*\figheight) {CSR [\%]~$\rightarrow$};
    \node[anchor=south,rotate=90] at (0, -1.5*\figheight) {$\leftarrow$~SOC};
    \node[anchor=south,rotate=90] at (0, -2.5*\figheight) {$\leftarrow$~time [\si{\second}]};
  \end{scope}
  %
  \node[] at (0.55*\figwidth, -3.2*\figheight) {\# robots};
  %
  \begin{scope}[shift={(1.75*\figwidth, -3.15*\figheight)}]
    \newcommand{\mylegend}[7][solid]{
      \definecolor{c}{RGB}{#4 #5 #6}
      \draw[draw=c,thick,#1] (#2\linewidth, 0) --++(0.02\linewidth, 0)
      node[pos=0.5]{\textcolor{c}{#7}}
      node[right,pos=1]{\scriptsize #3};
    }
    \mylegend{0}{\solver{GAMBIT}}{1}{16}{90}{\scalebox{0.6}{$\blacksquare$}}
    \mylegend{0.13}{\solver{GAMBIT $\backslash$RL}}{29}{57}{172}{\scalebox{0.75}{$\blacktriangleright$}}
    \mylegend{0.26}{\solver{GAMBIT $\backslash$IL}}{26}{132}{218}{\scalebox{0.75}{$\blacktriangleleft$}}
    \mylegend{0.39}{\solver{GAMBIT $\backslash$PIBT}}{38}{155}{214}{\scalebox{0.75}{$\blacktriangle$}}
    \mylegend{0.52}{\solver{GAMBIT $\backslash$IL $\backslash$RL}}{98}{214}{224}{\scalebox{0.75}{$\blacktriangledown$}}
  \end{scope}
  \begin{scope}[shift={(1.75*\figwidth, -3.35*\figheight)}]
    \newcommand{\mylegend}[6]{
      \definecolor{c}{RGB}{#3 #4 #5}
      \draw[draw=c,thick] (#1\linewidth, 0) --++(0.02\linewidth, 0)
      node[pos=0.5]{\textcolor{c}{#6}}
      node[right,pos=1]{\scriptsize #2};
    }
    \newcommand{\filledhexagon}{%
      \tikz[baseline=-0.6ex, x=0.5ex, y=0.5ex]{
        \fill[c] (1,0) -- (0.5,0.866) -- (-0.5,0.866) -- (-1,0)
          -- (-0.5,-0.866) -- (0.5,-0.866) -- cycle;
      }%
    }
    \newcommand{\filleddiamond}{%
      \tikz[baseline=-0.6ex, x=0.6ex, y=0.6ex]{
        \fill[c] (0,1) -- (1,0) -- (0,-1) -- (-1,0) -- cycle;
      }%
    }
    \mylegend{0}{\solver{db-LaCAM}}{255}{127}{14}{\raisebox{-1.05ex}{$\bullet$}}
    \mylegend{0.13}{\solver{D4ORM}}{44}{160}{44}{\scalebox{0.75}{$\pmb{\pmb{\pmb{\times}}}$}}
    \mylegend{0.26}{\solver{MMD-XECBS}}{214}{39}{40}{\scalebox{0.6}{$\bigstar$}}
    \mylegend{0.39}{\solver{RVO}}{127}{127}{127}{\raisebox{-1.2ex}{\filledhexagon}}
    \mylegend{0.52}{\solver{EGO-Swarm}}{234}{176}{88}{\filleddiamond}
  \end{scope}
\end{tikzpicture}
\caption{
  Main result.
  For each scenario, an example instance is shown at the top, where filled circles represent robots and unfilled circles connected by lines indicate their corresponding goals.
  \textcolor[RGB]{153	153	153}{Grey} circles, rectangles, and cylinders represent obstacles.
  The remaining plots show, from top to bottom, CSR (complete success rate), SOC (sum-of-costs; normalised by lower bounds based on start--goal distances), and computation time to derive the entire trajectories.
  SOC and computation time are averaged over successful instances for each planner, with shaded regions indicating one standard deviation;
  to ensure meaningful comparisons, we report these metrics only when the corresponding success rate exceeds 50\%.
  db-LaCAM's SOC often exceeds the displayed range; we use an orange dot and arrow to indicate the clipped value to preserve readability of the remaining curves.
  \vspace*{-1ex}
}
\label{fig:result}
\end{figure*}
}

\subsection{Implementation Details}
\label{sec:training-details}
In 2D, the motion-primitive set $M$ comprises 8 acceleration vectors uniformly distributed across a single acceleration magnitude, with $a\sub{max}{=}\SI{2}{\meter\per\second\squared}$, together with zero acceleration, yielding 9 primitives in total.
Similarly, in 3D, we use 26 directions at the max acceleration magnitude, with $a\sub{max}{=}\SI{5}{\meter\per\second\squared}$, together with zero acceleration, yielding 27 primitives in total.
The robot radius $R\sub{robot}$ is set to \SI{0.25}{\meter}, the communication radius $R\sub{comm}$ to \SI{3}{\meter}, and the FOV $N\sub{obs}$ to 11 cells along each dimension, with a cell size of \SI{0.25}{\meter}.
The workspace is typically $\SI{6}{\meter}$ along each dimension and $\Delta t$ is taken to be \SI{0.1}{\second}.

For IL, we use 60K demonstrations, with half generated on \mapname{Blocked Random} instances and the other half on \mapname{Forest} instances, containing four agents.
Data collection requires approximately 22 hours of wall-clock time on Intel Xeon Platinum 8452Y CPUs, with 24 runs executed in parallel.
The policy contains approximately 1.4M trainable parameters and is trained using the AdamW optimiser~\cite{Loshchilov-AdamW}.
IL training requires approximately 40 GPU hours on NVIDIA L40S GPUs.
The RL phase requires an additional 25 GPU hours, with rollouts using the PIBT-based safeguarded rollout described above, with the reward parameters in \cref{eq:reward} set to $w_1{=}5.0$ and $w_2{=}0.01$.
The policy is trained in Python and subsequently exported for C++ inference to reduce runtime overhead.
The 3D setup is configured similarly.

\section{Experiments}
We evaluate \gambit as a reactive planner that repeatedly selects motion primitives from local observations, combined with the safeguarded rollout (\cref{algo:pibt}).
Visualisations of both the simulation and real-robot deployment are provided in the supplementary video.
All methods are evaluated using the compute infrastructure described in \cref{sec:training-details}.

\subsection{Simulation Setup}
\subsubsection{Instances}
We prepare 32 instances with varying numbers of robots from $4$ to $32$, based on the following scenarios:
\emph{(i)}~\mapname{Empty} (2D), representing a basic scenario without obstacles.
Start and goal positions are sampled uniformly at random;
\emph{(ii)}~\mapname{Blocked Random} (2D, 3D) and \emph{(iii)}~\mapname{Forest} (2D, 3D) described in \cref{sec:pretraining}, representing in-distribution scenarios.
The test instances are distinct from those used during training;
\emph{(iv)}~\mapname{Conveyor} (2D), adapted from~\cite{shaoul2025multi}, representing out-of-distribution scenarios.

{
\DeclareRobustCommand\bluestar{\begin{tikzpicture}
  \node[circle, inner sep=0.5pt, minimum size=.1cm, color=white, fill=blue]{$\star$}; 
\end{tikzpicture}}
\newcommand{\imgsize}{0.078\linewidth}
\newcommand{\trimleft}{393}
\newcommand{\trimtop}{85}
\newcommand{\trimsize}{25}
\newcommand{\entry}[6]{
  \tikzmath{
    \trimbottom = 424.8 - \trimtop - \trimsize;
    \trimright = 424.8 - \trimleft - \trimsize;
  }%
  \node[anchor=north west,
    draw,very thin,lightgray,
    inner sep=0,gray]
  at (#1 * \imgsize, -#2 * \imgsize)
       {\includegraphics[
           trim={\trimleft bp \trimbottom bp \trimright bp \trimtop bp},
           clip,width=\imgsize,height=\imgsize]
         {fig/raw/forest40_1024_slides/trajectory_#3_#4}};
   \node[] at (#1 * \imgsize + #5 * \imgsize, -#2 * \imgsize - #6 * \imgsize) {\textcolor{white}{$\star$}};
}
\begin{figure*}[t]
\vspace*{1pt}
\centering
\begin{tikzpicture}
  \begin{scope}
  \node[anchor=north west] at (0, 0)
       {\includegraphics[width=0.329\linewidth,height=0.329\linewidth,clip,trim={7bp 7bp 7bp 7bp}]
         {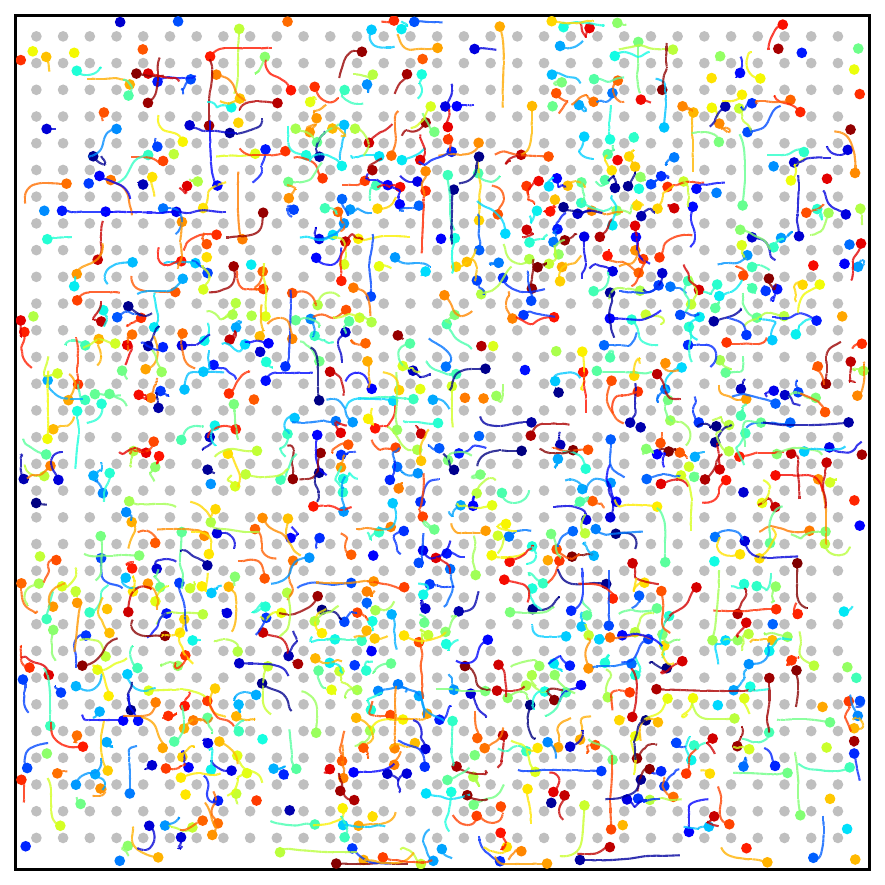}};
  \draw[color=black,line width=2.5pt] (0.317\linewidth, -0.09\linewidth) rectangle ++(0.02\linewidth, 0.02\linewidth);
  \draw[densely dotted](0.319\linewidth, -0.07\linewidth) --++ (0.05\linewidth, 0.06\linewidth);
  \draw[densely dotted](0.33\linewidth, -0.09\linewidth) --++ (0.04\linewidth, 0.003\linewidth);
  \node[] at (0.005\linewidth, -0.345\linewidth) {\tiny 0};
  \node[] at (0.34\linewidth, -0.345\linewidth) {\tiny \SI{40}{\meter}};
  \end{scope}
  \begin{scope}[shift={(0.36\linewidth, -0.007\linewidth)}]
  \entry{0}{0}{0060}{0080}{0.52}{0.71}
  \entry{1}{0}{0070}{0090}{0.5}{0.7}
  \entry{2}{0}{0080}{0100}{0.48}{0.7}
  \entry{3}{0}{0090}{0110}{0.35}{0.67}
  \entry{4}{0}{0100}{0120}{0.49}{0.68}
  \entry{5}{0}{0110}{0130}{0.51}{0.7}
  \entry{6}{0}{0120}{0140}{0.52}{0.7}
  \entry{7}{0}{0130}{0150}{0.51}{0.7}
  \draw[thick,->] (0, -1.15 * \imgsize) --++ (7.4*\imgsize, 0) node[right,pos=1]{time};
  \end{scope}
  \begin{scope}[shift={(0.36\linewidth, -0.115\linewidth)}]
    \newcommand{\figwidth}{0.15\linewidth}
    \newcommand{\figheight}{0.2\linewidth}
    \newcommand{\evoplot}[2]{
      \node[anchor=north west] at (#1 * \figwidth, 0)
           {\includegraphics[width=\figwidth,height=\figheight]{fig/raw/success_rate_evolution/#2}};
      \node[anchor=south] at (#1 * \figwidth + 0.55*\figwidth, -0.018\linewidth) {\scriptsize $n{=}#2$};
    }
    \evoplot{0}{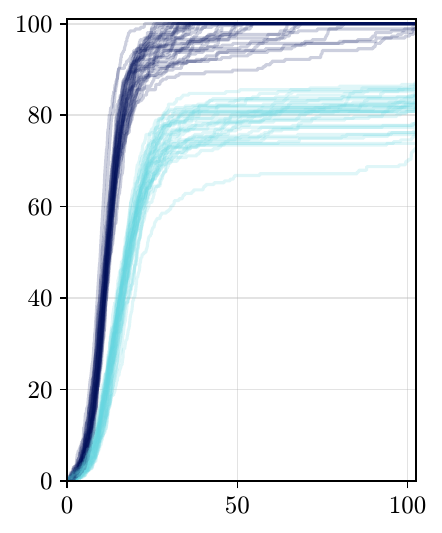}
    \evoplot{1}{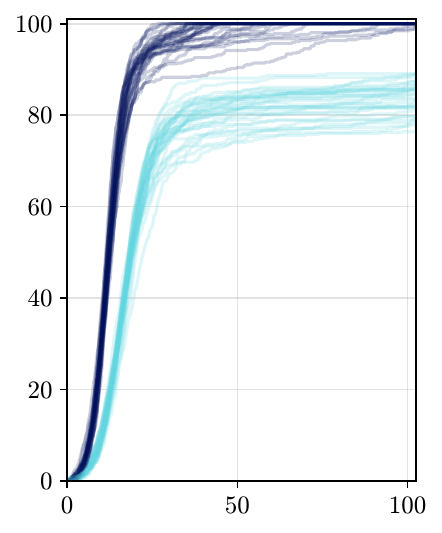}
    \evoplot{2}{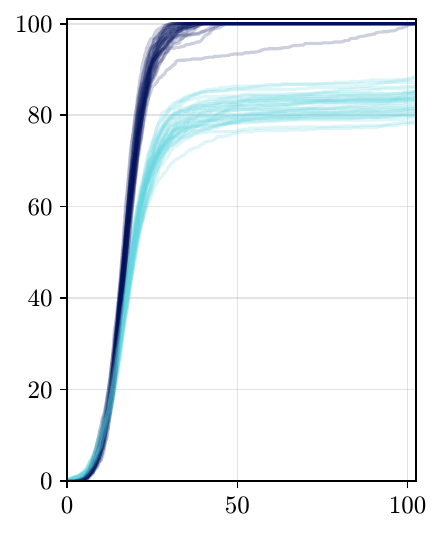}
    \node[rotate=90] at (0.005\linewidth, -0.5*\figheight) {\scriptsize ISR [\%]};
    \node[] at (0.57*\figwidth, -1.06 * \figheight) {\scriptsize simulation time [s]};
    \definecolor{gambitblue}{RGB}{9	19	87}
    \definecolor{gambitred}{RGB}{84	154	209}
    \newcommand{\legendentry}[3]{
      \draw[draw=#1,thick] (#2\linewidth, -1.06*\figheight) --++(0.02\linewidth, 0)
      node[right,pos=1]{\scriptsize #3};
    }
    \legendentry{gambitblue}{0.2}{\solver{GAMBIT}}
    \legendentry{gambitred}{0.3}{\solver{GAMBIT $\setminus$IL $\setminus$RL}}
    \node[anchor=north west] at (3.15 * \figwidth, -0.05)
         {\includegraphics[width=0.14\linewidth,height=0.188\linewidth,clip,trim={32bp 0 0 0},
           ]{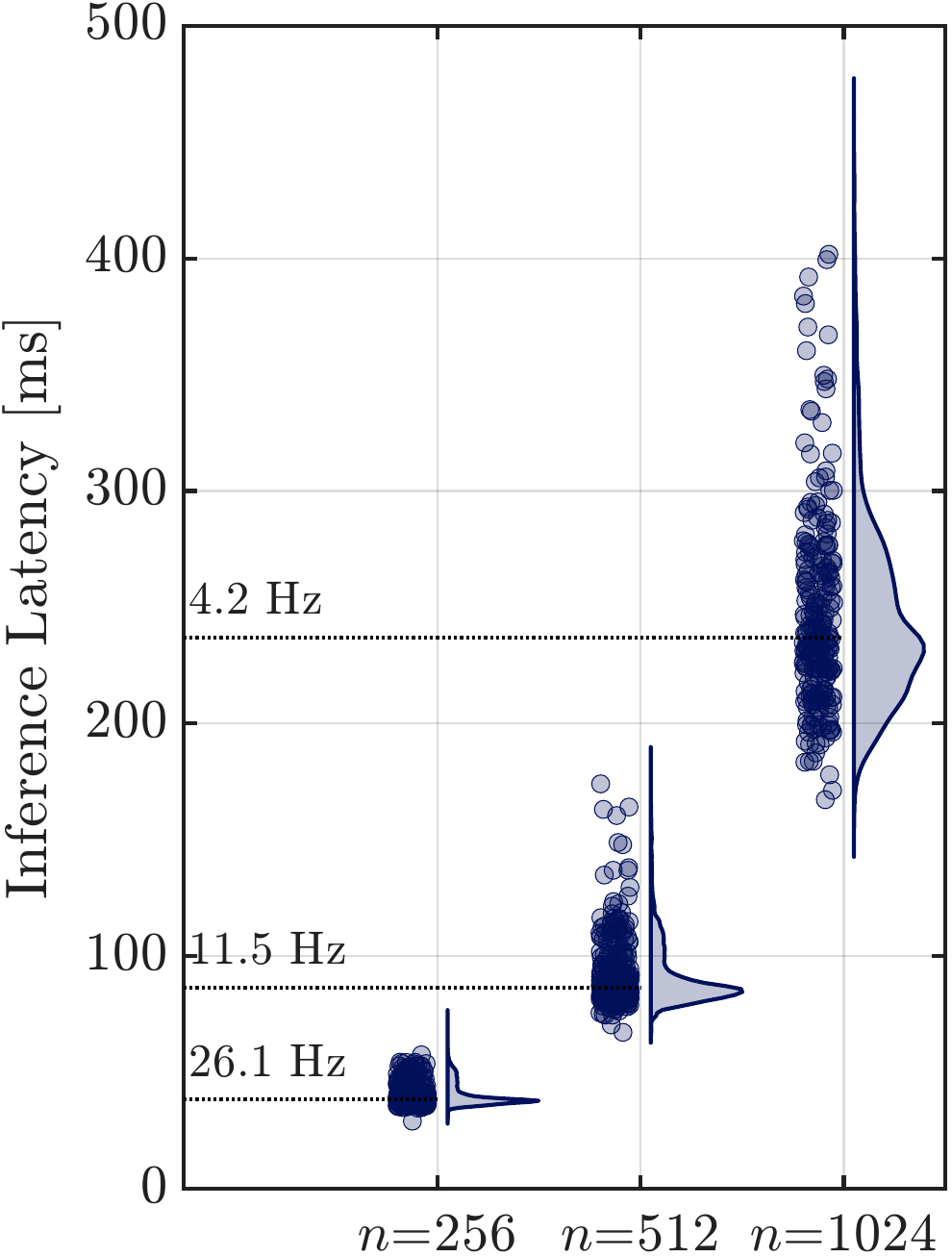}};
    \node[rotate=90] at (3.15 * \figwidth, -0.09\linewidth) {\scriptsize inference latency [\si{\milli\second}]};
    \node[] at (3.7*\figwidth, -1.06 * \figheight) {\scriptsize \#robots};
  \end{scope}

\end{tikzpicture}
\caption{
  Scalability assessment with up to $1{,}024$ robots in the \mapname{Forest} scenario.
  The left panel shows a snapshot of the entire workspace at a certain timestep.
  Circles represent robots, with solid lines indicating their trajectories as tails.
  The upper-right panel depicts the trajectory evolution within a specific region, highlighting coordinated behaviours produced by \gambit; e.g., the
  \bluestar{} robot
  temporarily moves away from its goal to let others pass.
  The lower-right panel presents individual success rate (ISR) evolution for $n=\{256, 512, 1{,}024\}$;
  i.e., the percentage of robots that have reached their goals, as a function of elapsed simulation time.
  We run the planners on 32 instances, with each curve representing the solution over a different instance.
  The bottom-right panel shows the distribution around the median inference time/step (and the corresponding loop frequency) for computing actions for the entire team.
}
\label{fig:scalability}
\end{figure*}
}

\subsubsection{Baselines}
On these instances, we contrast \gambit against the following representative baselines spanning different planning and optimisation paradigms:
\begin{itemize}
\item \solver{RVO}~\cite{van2011reciprocal}, a widely used reactive collision-avoidance algorithm.
  We use its double-integrator formulation~\cite{van2011avo}.
\item \solver{db-LaCAM}~\cite{moldagalieva2026dblacam}, a centralised, full-horizon kinodynamic multi-robot motion planner that computes complete trajectories from start to goal.
  A variant of this planner is also used to generate the dataset for IL.
\item \solver{MMD-XECBS}~\cite{shaoul2025multi},
  a centralised, full-horizon planner based on diffusion models trained for the target maps.
  We include this method as a learning-based alternative.
\item \solver{D4ORM}~\cite{zhang2025d4orm}, a GPU-accelerated population-based trajectory smoother for multi-robot systems.
\item \solver{EGO-Swarm}~\cite{zhou2021ego}, a decentralised reactive trajectory planner designed for 3D multi-drone navigation.
\end{itemize}
For all baselines, we use the publicly available codebases provided by the respective authors.
Due to their implementation constraints, \solver{RVO} supports only 2D obstacle-free scenarios (i.e. \mapname{Empty}), whereas \solver{EGO-Swarm} is evaluated only in 3D.
As \solver{MMD} requires training on the target scenario, we evaluate it only on \mapname{Empty} and \mapname{Conveyor}.
All planners are allocated a time limit of \SI{1}{\minute}.

We additionally evaluate the following variants of \gambit to assess the contribution of each technical component:
\begin{itemize}
\item \solver{GAMBIT $\setminus$IL $\setminus$RL} removes the learning components entirely, and instead sorts motion primitives in \cref{algo:pibt:sort} using a manually designed heuristic.
  This heuristic considers the positional distance to the goal from the rollout state together with the resulting velocity, representing greedy behaviour towards the goal.
\item \solver{$\setminus$RL} ablates the RL fine-tuning stage, using the policy trained solely by IL.
\item \solver{$\setminus$IL} removes IL, with the policy trained purely by RL until convergence ($\approx\SI{65}{\hour}$).
\item \solver{$\setminus$PIBT} removes the safeguarded rollout, directly samples a motion primitive from the action distribution, and executes it according to \cref{eq:dynamics}.
  For this variant, we modify the RL reward to penalise collisions.
\end{itemize}

\subsubsection{Metrics}
The following metrics assess the performance of each planner:
\emph{complete success rate (\textbf{CSR})} measures the fraction of instances in which \emph{all} agents reach their goals;
\emph{sum-of-costs (\textbf{SOC})} assesses trajectory quality;
and \emph{computation time (\textbf{Time})} measures the wall-clock time required to compute the complete trajectories.
We additionally use \emph{individual success rate (\textbf{ISR})}, measuring the fraction of agents that reach their goals within each instance.

\subsection{Main Results}
\Cref{fig:result} summarises our main results.
\gambit achieves high success rates across all tested scenarios, including both in-distribution and out-of-distribution settings, at scale while maintaining highly competitive solution quality, delivering the strongest overall performance among the evaluated methods.
The computation time remains moderate; even with $32$ robots, \gambit requires only about \SI{1}{\second} to derive feasible trajectories, suggesting real-time planning ability, which we investigate further in \cref{sec:exp:scalability,sec:exp:demo}.
A closer look reveals that learning is a key component (see \solver{GAMBIT $\setminus$IL $\setminus$RL}), with both IL and RL providing substantial improvements in success rate.
The performance of \solver{$\setminus$PIBT} further demonstrates that the safeguarded rollout is critical to achieving strong performance; i.e., direct execution of the learned policy is not performant.
We conjecture that this is because multi-robot navigation policies generally need to learn multiple objectives jointly, including goal reaching and collision avoidance, whereas the safeguarded rollout largely relieves the policy of the latter, simplifying the learning problem and allowing the policy to focus its learning capacity on goal-directed coordination.

Among the baselines, the domain-independent planners \solver{db-LaCAM} and \solver{D4ORM} struggle to scale, while \solver{MMD-XECBS}, which uses learning to adapt to the target domain, achieves superior planning performance but at a high computational cost.
Decentralised local collision-avoidance planners such as \solver{RVO} and \solver{EGO-Swarm} struggle with interaction-heavy, dense instances where many robots are packed into a limited workspace, as these scenarios require coordination beyond mere collision avoidance.
While remaining reactive at execution time, \gambit incorporates such long-horizon reasoning through offline learning.

While \gambit is capable of dense coordination, some failures remain.
Our qualitative observation suggests that these failures are primarily caused by livelocks, where one or more robots repeatedly execute the same actions without making progress.
Livelocks occur primarily near obstacles, e.g., when a robot must pass through a narrow gap between obstacles but safety margins prevent the motion, or when it must traverse around an obstacle to reach a goal on the opposite side.
We envision two promising directions for overcoming this limitation: stronger policies through improved architectures and training pipelines, and integrating the learned motion-primitive heuristic into long-horizon search~\cite{zhang2025d4orm,moldagalieva2026dblacam}.
Importantly, none of the observed failures are caused by collisions, owing to the safeguarded rollout.

\subsection{Scalability Assessment}
\label{sec:exp:scalability}
Using the \mapname{Forest} scenario with an enlarged workspace, we evaluate the scalability of \gambit on instances with up to $1{,}024$ robots.
At this scale, none of the baseline methods successfully solve the instances, and thus we evaluate \gambit and its non-learning variant (i.e., \solver{$\setminus$IL $\setminus$RL}).
\Cref{fig:scalability} demonstrates that \gambit provides a highly scalable planning framework, maintaining high individual success rates even in densely packed settings.
This scale is significantly beyond the reach of existing methods for computing full-horizon continuous multi-robot trajectories.
Moreover, \gambit retains real-time responsiveness at scale: with $512$ robots, it synthesises safe coordinated actions in $<\SI{100}{\milli\second/step}$, enabling real-time control at scale.
While the non-learning, greedy variant still allows a portion of the robots to reach their goals (see the ISR evolution plot), some robots remain unsuccessful due to its coordination-unaware heuristic, emphasising the importance of learning coordination-aware heuristics.

{%
\setlength{\fboxsep}{0pt}%
\setlength{\fboxrule}{0.1px}%
\begin{figure}[t!]
    \vspace*{6pt}
    \centering
        \includegraphics[width=0.95\linewidth]
            {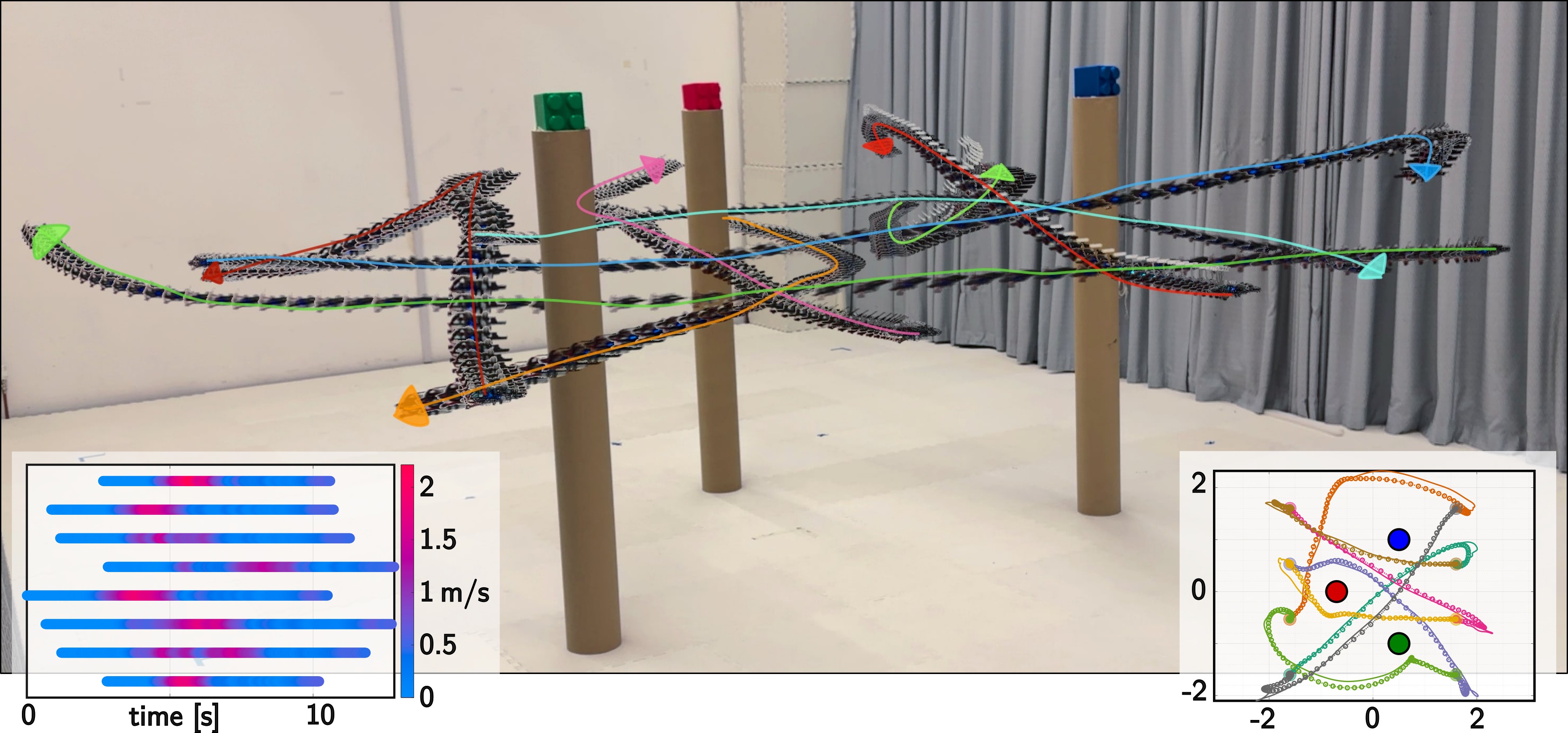}%
\caption{Snapshot from a \gambit deployment with eight multirotors planning at \SI{10}{Hz}.
\textbf{(left-inset)} Speed profile $\mathrm{(m/s)}$ for the robots in this instance.
\textbf{(right-inset)} Top-down view of the arena and trajectories.
\vspace*{-2ex}
}
\label{fig:drones}
\end{figure}
}

\subsection{Multirotor Deployment}
\label{sec:exp:demo}
We finally deploy \gambit on physical multirotors~\cite{woo2025sanity}.
\Cref{fig:drones} shows overlaid trajectories from one demonstration in which eight multirotors swap their start--goal locations in a motion-capture arena containing obstacles.
We run \gambit at \SI{10}{\hertz} to synthesise motion online, yielding collision-free execution under closed-loop control even at a peak speed of \SI{2.2}{m/s} in the instance shown.

\section{Conclusion}
We presented \gambit, a framework for learning motion-primitive heuristics for dense multi-robot coordination.
While the resulting policy combined with our safeguarded rollout already achieves strong planning performance and exceptional scalability, we view \gambit more broadly as a powerful toolkit for building more capable multi-robot planners.
Indeed, learned components have increasingly demonstrated their potential to enhance multi-agent planning~\cite{alkazzi2024a,shaoul2025multi,jain2026graph}.
Promising directions include extending it to more general dynamics beyond the double-integrator model.

\bibliographystyle{IEEEtran}
\bibliography{sty/ref-macro,bibliography}  

@book{lavalle2006planning,
  title         = {Planning algorithms},
  author        = {LaValle, Steven Michael},
  year          = {2006},
  publisher     = {Cambridge university press}
}

@article{likhachev2009planning,
  title         = {Planning long dynamically feasible maneuvers for autonomous vehicles},
  author        = {Likhachev, Maxim and Ferguson, Dave},
  journal       = ijrr,
  year          = {2009}
}

@inproceedings{pivtoraiko2011kinodynamic,
  title         = {Kinodynamic motion planning with state lattice motion primitives},
  author        = {Pivtoraiko, Mihail and Kelly, Alonzo},
  booktitle     = iros,
  year          = {2011}
}

@inproceedings{van2011reciprocal,
  title         = {Reciprocal n-body collision avoidance},
  author        = {Van Den Berg, Jur and Guy, Stephen J and Lin, Ming and Manocha, Dinesh},
  booktitle     = isrr,
  year          = {2011}
}

@inproceedings{van2011avo,
  title         = {Reciprocal collision avoidance with acceleration-velocity obstacles},
  author        = {Van Den Berg, Jur and Snape, Jamie and Guy, Stephen J and Manocha, Dinesh},
  booktitle     = icra,
  year          = {2011}
}

@inproceedings{Vaswani-Attention,
  author        = {Vaswani, Ashish and Shazeer, Noam and Parmar, Niki and Uszkoreit, Jakob and Jones, Llion and Gomez, Aidan N and Kaiser, Lukasz and Polosukhin, Illia},
  title         = {Attention Is All You Need},
  booktitle     = neurips,
  year          = {2017}
}

@inproceedings{ames2019control,
  title         = {Control barrier functions: Theory and applications},
  author        = {Ames, Aaron D and Coogan, Samuel and Egerstedt, Magnus and Notomista, Gennaro and Sreenath, Koushil and Tabuada, Paulo},
  booktitle     = ecc,
  year          = {2019}
}

@article{camci2019learning,
  title         = {Learning motion primitives for planning swift maneuvers of quadrotor},
  author        = {Camci, Efe and Kayacan, Erdal},
  journal       = auro,
  year          = {2019}
}

@inproceedings{Loshchilov-AdamW,
  title         = {Decoupled weight decay regularization},
  author        = {Loshchilov, Ilya and Hutter, Frank},
  booktitle     = iclr,
  year          = {2019}
}

@article{sartoretti2019primal,
  title         = {Primal: Pathfinding via reinforcement and imitation multi-agent learning},
  author        = {Sartoretti, Guillaume and Kerr, Justin and Shi, Yunfei and Wagner, Glenn and Kumar, TK Satish and Koenig, Sven and Choset, Howie},
  journal       = ral,
  year          = {2019}
}

@article{solovey2020complexity,
  author        = {Kiril Solovey},
  title         = {Complexity of Planning},
  journal       = {arXiv preprint},
  year          = {2020}
}

@article{riviere2020glas,
  title         = {Glas: Global-to-local safe autonomy synthesis for multi-robot motion planning with end-to-end learning},
  author        = {Riviere, Benjamin and H{\"o}nig, Wolfgang and Yue, Yisong and Chung, Soon-Jo},
  journal       = ral,
  year          = {2020}
}

@article{li2021message,
  title         = {Message-aware graph attention networks for large-scale multi-robot path planning},
  author        = {Li, Qingbiao and Lin, Weizhe and Liu, Zhe and Prorok, Amanda},
  journal       = ral,
  year          = {2021}
}

@article{tordesillas2021mader,
  title         = {MADER: Trajectory planner in multiagent and dynamic environments},
  author        = {Tordesillas, Jesus and How, Jonathan P},
  journal       = tro,
  year          = {2021}
}

@inproceedings{zhou2021ego,
  title         = {Ego-swarm: A fully autonomous and decentralized quadrotor swarm system in cluttered environments},
  author        = {Zhou, Xin and Zhu, Jiangchao and Zhou, Hongyu and Xu, Chao and Gao, Fei},
  booktitle     = icra,
  year          = {2021}
}

@inproceedings{batra2022decentralized,
  title         = {Decentralized control of quadrotor swarms with end-to-end deep reinforcement learning},
  author        = {Batra, Sumeet and Huang, Zhehui and Petrenko, Aleksei and Kumar, Tushar and Molchanov, Artem and Sukhatme, Gaurav S},
  booktitle     = corl,
  year          = {2022}
}

@article{okumura2022priority,
  title         = {Priority Inheritance with Backtracking for Iterative Multi-agent Path Finding},
  journal       = aij,
  year          = {2022},
  author        = {Keisuke Okumura and Manao Machida and Xavier Défago and Yasumasa Tamura}
}

@inproceedings{ouyang2022training,
  title         = {Training language models to follow instructions with human feedback},
  author        = {Ouyang, Long and Wu, Jeffrey and others},
  booktitle     = neurips,
  year          = {2022}
}

@article{wen2022cl,
  title         = {CL-MAPF: Multi-agent path finding for car-like robots with kinematic and spatiotemporal constraints},
  author        = {Wen, Licheng and Liu, Yong and Li, Hongliang},
  journal       = {Robotics and Autonomous Systems},
  year          = {2022}
}

@inproceedings{yu2022surprising,
  title         = {The surprising effectiveness of ppo in cooperative multi-agent games},
  author        = {Yu, Chao and Velu, Akash and Vinitsky, Eugene and Gao, Jiaxuan and Wang, Yu and Bayen, Alexandre and Wu, Yi},
  booktitle     = neurips,
  year          = {2022}
}

@article{alkazzi2024a,
  title         = {A Comprehensive Review on Leveraging Machine Learning for Multi-Agent Path Finding},
  author        = {Alkazzi, Jean-Marc and Okumura, Keisuke},
  year          = {2024},
  journal       = {IEEE Access}
}

@inproceedings{okumura2024engineering,
  title         = {Engineering LaCAM$^\ast$: Towards Real-Time, Large-Scale, and Near-Optimal Multi-Agent Pathfinding},
  author        = {Okumura, Keisuke},
  year          = {2024},
  booktitle     = aamas
}

@article{ortiz2024idb,
  title         = {iDb-A*: Iterative search and optimization for optimal kinodynamic motion planning},
  author        = {Ortiz-Haro, Joaquim and H{\"o}nig, Wolfgang and Hartmann, Valentin N and Toussaint, Marc},
  journal       = tro,
  year          = {2024}
}

@inproceedings{veerapaneni2024improving,
  title         = {Improving Learnt Local MAPF Policies with Heuristic Search},
  author        = {Veerapaneni, Rishi and Wang, Qian and Ren, Kevin and Jakobsson, Arthur and Li, Jiaoyang and Likhachev, Maxim},
  booktitle     = icaps,
  year          = {2024}
}

@article{franke2025accelerating,
  title         = {Accelerating db-A* for kinodynamic motion planning using diffusion},
  author        = {Franke, Julius and Moldagalieva, Akmaral and Hanfeld, Pia and H{\"o}nig, Wolfgang},
  journal       = {arXiv preprint},
  year          = {2025}
}

@misc{woo2025sanity,
  title         = {Sanity: An Agile Brushless Quadrotor for Multi-Agent Experiments},
  author        = {Heedo Woo and Kazi R. I. Sanim and Keisuke Okumura and Guang Yang and Ajay Shankar and Amanda Prorok},
  year          = {2025},
  note          = {\url{https://openreview.net/forum?id=i5dKOANPZZ}}
}

@article{hou2025primitive,
  title         = {Primitive-Swarm: An Ultra-Lightweight and Scalable Planner for Large-Scale Aerial Swarms},
  author        = {Hou, Jialiang and Zhou, Xin and Pan, Neng and Li, Ang and Guan, Yuxiang and Xu, Chao and Gan, Zhongxue and Gao, Fei},
  journal       = tro,
  year          = {2025}
}

@inproceedings{jiang2025deploying,
  title         = {Deploying ten thousand robots: Scalable imitation learning for lifelong multi-agent path finding},
  author        = {Jiang, He and Wang, Yutong and Veerapaneni, Rishi and Duhan, Tanishq and Sartoretti, Guillaume and Li, Jiaoyang},
  booktitle     = icra,
  year          = {2025}
}

@article{moldagalieva2025db,
  title         = {db-ECBS: Interaction-Aware Multirobot Kinodynamic Motion Planning},
  author        = {Moldagalieva, Akmaral and Ortiz-Haro, Joaquim and H{\"o}nig, Wolfgang},
  journal       = tro,
  year          = {2025}
}

@inproceedings{shaoul2025multi,
  title         = {Multi-robot motion planning with diffusion models},
  author        = {Shaoul, Yorai and Mishani, Itamar and Vats, Shivam and Li, Jiaoyang and Likhachev, Maxim},
  booktitle     = iclr,
  year          = {2025}
}

@inproceedings{wang2025lns2+,
  title         = {LNS2+ RL: Combining multi-agent reinforcement learning with large neighborhood search in multi-agent path finding},
  author        = {Wang, Yutong and Duhan, Tanishq and Li, Jiaoyang and Sartoretti, Guillaume},
  booktitle     = aaai,
  year          = {2025}
}

@inproceedings{zhang2025d4orm,
  title         = {D4orm: Multi-Robot Trajectories with Dynamics-aware Diffusion Denoised Deformations},
  author        = {Zhang, Yuhao and Okumura, Keisuke and Woo, Heedo and Shankar, Ajay and Prorok, Amanda},
  year          = {2025},
  booktitle     = iros
}

@inproceedings{intelligence2026pi,
  title         = {$\pi^\ast_{0.6}$: a {VLA} That Learns From Experience},
  author        = {{Physical Intelligence}},
  booktitle     = rss,
  year          = {2026}
}

@inproceedings{jain2026pairwise,
  title         = {Pairwise is Not Enough: Hypergraph Neural Networks for Multi-Agent Pathfinding},
  author        = {Jain, Rishabh and Okumura, Keisuke and Amir, Michael and Lio, Pietro and Prorok, Amanda},
  year          = {2026},
  booktitle     = iclr
}

@inproceedings{jain2026graph,
  title         = {Graph Attention-Guided Search for Dense Multi-Agent Pathfinding},
  author        = {Jain, Rishabh and Okumura, Keisuke and Amir, Michael and Prorok, Amanda},
  year          = {2026},
  booktitle     = aaai
}

@inproceedings{moldagalieva2026dblacam,
  title         = {db-LaCAM: Fast and Scalable Multi-Robot Kinodynamic Motion Planning with Discontinuity-Bounded Search and Lightweight MAPF},
  author        = {Moldagalieva, Akmaral and Okumura, Keisuke and Prorok, Amanda and Hönig, Wolfgang},
  year          = {2026},
  booktitle     = icaps
}

@string{access={IEEE Access}}

@string{aij   = {Artif. Intell.}}

@string{ijrr  = {Int. J. Robot. Res.}}

@string{tro   = {IEEE Trans. Robot.}}

@string{ral   = {IEEE Robot. Autom. Lett.}}

@string{auro={Auton. Robots}}

@string{aaai={AAAI}}

@string{aamas={AAMAS}}

@string{icaps={ICAPS}}

@string{neurips={NeurIPS}}

@string{corl={CoRL}}

@string{iros={IROS}}

@string{icra={ICRA}}

@string{iclr={ICLR}}

@string{rss={RSS}}

@string{isrr={ISRR}}

@string{ecc={ECC}}

@string{isrr={Proceedings of the International Symposium of Robotics Research (ISRR)}}

\end{document}